\def\eqref#1{equation~\ref{#1}}
\def\1{\bm{1}}
\def\vg{{\bm{g}}}
\def\vh{{\bm{h}}}
\def\vp{{\bm{p}}}
\def\vx{{\bm{x}}}
\DeclareMathAlphabet{\mathsfit}{\encodingdefault}{\sfdefault}{m}{sl}
\SetMathAlphabet{\mathsfit}{bold}{\encodingdefault}{\sfdefault}{bx}{n}
\definecolor{mygray}{gray}{.95}
\newtheorem{theorem}{Theorem}
\newtheorem{definition}{Definition}
\definecolor{apricot}{RGB}{255,240,234}
\definecolor{amber}{RGB}{214,81,30}
\definecolor{lightcyan}{RGB}{229,250,245}
\definecolor{teal}{RGB}{0,121,86}
\DeclareRobustCommand{\hlapricot}[1]{{\sethlcolor{apricot}\hl{#1}}}
\DeclareRobustCommand{\hllightcyan}[1]{{\sethlcolor{lightcyan}\hl{#1}}}
\newcommand\dottedcircle{\tikz \draw [line cap=round, line width=0.25ex, dash pattern=on 0pt off 2pt] (0,0) circle [radius=0.75ex];}
\newcommand{\wyshi}[1]{\textcolor{black}{#1}}
\newcommand{\wyshiwyshi}[1]{\textcolor{black}{#1}}
\title{Selective Differential Privacy for Language Modeling}
\author{Weiyan Shi$^{1}$, Aiqi Cui$^{1}$, Evan Li$^{1}$, Ruoxi Jia$^2$ Zhou Yu$^1$\\
Columbia University$^1$,Virginia Tech$^2$ \\
\{ws2634, ac4788, el3078\}@columbia.edu, ruoxijia@vt.edu, , zy2461@columbia.edu}
\begin{document}
\maketitle
\begin{abstract}
With the increasing applications of language models, it has become crucial to protect these models from leaking private information. Previous work has attempted to tackle this challenge by training RNN-based language models with differential privacy guarantees.
However, applying classical differential privacy to language models leads to poor model performance as the underlying privacy notion is over-pessimistic and provides \emph{undifferentiated} protection for all tokens in the data. Given that the private information in natural language is sparse (for example, the bulk of an email might not carry personally identifiable information), we propose a new privacy notion, \emph{selective differential privacy}, to provide rigorous privacy guarantees on the sensitive portion of the data to improve model utility. To realize such a new notion, we develop a corresponding privacy mechanism, Selective-DPSGD, for RNN-based language models. Besides language modeling, we also apply the method to a more concrete application~--~dialog systems. Experiments on both language modeling and dialog system building show that the proposed privacy-preserving mechanism achieves better utilities while remaining safe under various privacy attacks compared to the baselines. The data and code are released to facilitate future research\footnote{\url{https://github.com/wyshi/lm_privacy}}.
\end{abstract}

\section{Introduction}
\begin{figure}
    \centering
    \includegraphics[scale=0.65]{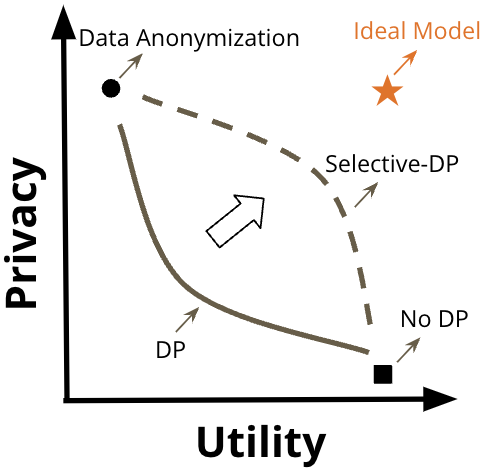}
    \caption{\wyshiwyshi{``Data anonymization'' and training with ``No DP'' \emph{cannot} provide knobs to adjust the privacy-utility trade-off. Selective-DP improves the privacy-utility trade-off of traditional DP, to get closer to the ideal model with both high privacy and high utility.}  }
    \label{fig:intro}
\end{figure}

Language models have been widely used in various kinds of applications, such as Google Smart Compose, Amazon Alexa, and so on. However, these models are often trained on highly sensitive data such as emails and chat logs, while having the tendency to memorize the training data unintentionally 
\citep{carlini2019secret, carlini2020extracting}.  Therefore, how to protect user privacy while preserving the model utility has become an increasingly important topic. 

Several methods have been developed to protect the data, such as data anonymization, $k$-anonymity, and differential privacy (DP). Among them, DP has become a dominant privacy notion as it provides formal and provable guarantees for people to understand the privacy-utility trade-off (Figure~\ref{fig:intro}). It works by carefully randomizing the algorithm so that the model does not rely too much on any single data point. 
However, traditional DP notion protects each data point as a whole regardless of the property of individual attributes inside the data 
\citep{mcmahan2018learning}. Training large models with this overly pessimistic privacy notion could lead to poor model performance or even a non-convergent training process~\cite{kerrigan2020differentially}. Our work is inspired by an important observation that in many scenarios including language modeling, 
private information is sparse,
and not all attributes need to be protected. For example, for the sentence ``My SSN is 123-45-6789'', only the last token with the actual SSN number needs to be protected. But if we protect the entire sentence, we may fail to learn the underlying language pattern well. 

To solve this problem, we propose a new DP notion, namely \emph{selective differential privacy} (S-DP), to provide focused protection for sensitive attributes \wyshi{in a training record} and improve model utility. \wyshi{
We follow the traditional DP setup, where each training record is contributed by a different user. The key difference is that we consider only partial dimensions of a training record as sensitive, which better abstracts the language data privacy problems}.  Whether a given dimension in a record is sensitive is specified by a user-defined policy function that encodes application-specific privacy regulations, \wyshiwyshi{which gives users the freedom to protect any type of sensitive information according to the use cases.} 
We also develop a corresponding privacy mechanism, Selective-DPSGD, for RNN-based language models under the new notion. 
Moreover, to process the variable-length sequences, we propose a batching method to group private and public tokens together and alternatively update them for a more efficient implementation of Selective-DPSGD. 


\wyshiwyshi{
One important concept to note  is that, as shown in Figure~\ref{fig:intro}, there is always a trade-off between privacy and utility:  data anonymization achieves high privacy guarantee at the cost of low utility on private tokens; models trained without DP have high utility but low privacy. However, for both data anonymization and ``no DP'', there is no way to tune the trade-off, while DP-related methods provide knobs to adjust the privacy-utility trade-off. This paper proposes S-DP to improve the trade-off of canonical DP to get closer to the ideal model with both high-utility and high-privacy. }

We evaluate S-DP on two tasks, 1) a language generation task and 2) a more concrete application of dialog systems. 
Besides reporting the model utility and theoretical privacy guarantees, we also empirically demonstrate their robustness under popular privacy attacks on language data~\cite{carlini2019secret,carlini2020extracting}.
The experiments suggest that training with Selective-DPSGD improves the model utility while remaining safe to the attacks.

Our contributions are as follows. First, we propose a \wyshi{\textbf{new}} selective differential privacy \textbf{notion} that ensures targeted privacy protection for sensitive attributes, and a corresponding mechanism to realize the new S-DP notion for RNN-based models.
Second, we propose a dialog dataset for future privacy  research.  
Next, we show both theoretically and practically that our models are safe to attacks with improved utilities on both the language generation task and the dialog system application.  \wyshiwyshi{Moreoever, we discuss the case of imperfect policy function, and compare S-DP with data anonymization to show that S-DP achieves better utilities when the policy function is imperfect. } We also show preliminary results on contextual policy functions and Transformer models. 
In the era of information explosion and large-scale pretraining, protecting data privacy becomes more and more important. With S-DP, we march one step closer towards more privacy-preserving and high-performing language models and hope to inspire more research in this direction in the NLP community. Moreover, despite our focus on language-related applications in this paper, the proposed S-DP notion could be useful for a much broader range of applications where only partial data require privacy protection.

\section{Related Work}

Language modeling is a key research problem in NLP. 
However, although language models are often trained on sensitive data such as emails,  most related studies focus on improving model without considering privacy, leaving the models vulnerable to attacks. For example, \citet{carlini2019secret} showed that it is possible to infer a secret in the training data by attacking  published language models. 
Therefore, it is of great importance to introduce privacy protection mechanisms to the NLP community and train the models in a much safer way.


Differential privacy (DP) \citep{dwork2014algorithmic} has been applied to various domains \cite{cortes2016differential, abowd2018us}. 
\citet{abadi2016deep} introduced DPSGD to train DP-protected deep-learning models. 
PATE~\cite{papernot2018scalable} leveraged knowledge distillation to train differentially private models. 
But DP-protected algorithms suffer from low utility, so
 the DP notion often needs to be adjusted according to the applications: \citet{ebadi2015differential} proposed a personalized DP notion to provide different levels of protection for different users; \citet{doudalis2017one} developed one-sided DP to protect sensitive users only. We also propose a new Selective-DP notion to protect only the sensitive attributes in a record to improve  utility.

Recently, DP has also been applied to NLP tasks \cite{ fernandes2019generalised, xu2020differentially,yue2021differential, hathurusinghe2021privacy, sasada2021differentially}.  For example, \citet{mcmahan2018learning} proposed DP-FedAvg and DP-FedSGD to train RNN language models with user-level privacy guarantees.  
\citet{adelani2020privacy} developed a probabilistic text de-identification algorithm with formal privacy guarantees
.
Different from existing work that directly applied DPSGD and provided undifferentiated protection for all training examples, we propose a new privacy notion and a corresponding mechanism to protect the sensitive portion of the data 
in centralized learning. Such a new notion can be easily adapted to federated learning settings as well. 

There is also a line of work that attempts to improve differentially private deep learning performance via modifying training process~\citep{wang2021dplis,wang2019dplssgd,thakkar2019differentially,lee2020differentially} or model architecture~\citep{TemperedSigmoid}. Our work is complementary to this line of work as we propose a new privacy notion. Particularly, our work can be combined with the aforementioned methods to further improve model utility.


\section{Backgrounds}

We will introduce language modeling and differential privacy as preliminaries in this section. 

\noindent\textbf{Language Modeling.} Consider a text sequence that consists of multiple tokens, i.e., $\vx = (x_1, x_2, \dots, x_{n})$, where $x_i$ is the $i$-th token. The goal of language modeling is to learn the probability of the sequence $p(\vx)$, which can be factorized with the chain rule as in Equation~(\ref{eq:lm}). Given a corpus $D = \{\vx^1, \dots, \vx^{|D|}\}$, we train a neural network (e.g., RNN) parameterized by $\theta$ to learn $p(\vx)$ by minimizing the negative log-likelihood over $D$ with the loss function in Equation~(\ref{eq:lm1}).
\begin{align}
\label{eq:lm}
    &p(\vx) = \prod_{i=1}^np(x_i| \vx_{<i}),\\
\label{eq:lm1}
    &\mathcal{L}(D) = -\sum_{t=1}^{|D|} \sum_{i=1}^{n_t}\log p_{\theta}(x_i^t|\vx_{<i}^t)
\end{align}


\noindent\textbf{Differential Privacy (DP)}~\cite{ dwork2014algorithmic}. A differentially private algorithm ensures that its output cannot help much to distinguish whether an individual record is contained in the input dataset. In other words, DP hides the presence of individual records. The formal definition is as follows.

\begin{definition}
(Differential Privacy). Given a domain $\mathcal{D}$, any two datasets $D, D'\subseteq \mathcal{D}$ that differ in exactly one record are called neighboring datasets.  A randomized algorithm $\mathcal{M}: \mathcal{D} \rightarrow \mathcal{R}$ is $(\epsilon, \delta)$-differential private if for all neighboring datasets $D$ and $D'$ and all $T \subseteq \mathcal{R}$, 
\begin{align*}
    \Pr[\mathcal{M(D)}\subseteq T] \leq e^{\epsilon}\Pr[\mathcal{M(D')}\subseteq T] + \delta.
\end{align*}
\end{definition}


\section{Selective Differential Privacy}

Canonical DP notion treats all records as sensitive. Prior work has studied variants of DP notions, such as personalized DP~\cite{jorgensen2015conservative} and one-sided DP~\cite{doudalis2017one}, to exploit
different privacy levels between records. However, existing privacy notions do not allow different attributes in a given record to have different privacy levels, which could otherwise potentially enable additional utility gains, especially for NLP tasks where private attributes are sparse. 
Hence, we propose a new privacy notion--\emph{selective differential privacy}--to distinguish between private and non-private attributes inside one data point with a \emph{policy function} and protect sensitive part of one data point.


\begin{definition} (Policy Function). 
A policy function $F: \tau \rightarrow \{0, 1\}^{n_r}$ denotes which attributes of a record $r\in \tau$ are sensitive ($F(r)_i=0$) or non-sensitive ($F(r)_i=1$), where $n_r$ is the number of attributes in $r$. Note that $n_r$  depends on the record and is not a fixed number.  
\label{def:policy function}
\end{definition}

\wyshiwyshi{In practice, users have the freedom to define the policy function to encode specific privacy regulation and protect any sensitive attributes according to the applications. The protected sensitive attribute types are unlimited, can be entities (e.g., name, emails, etc), contextual (e.g., health-related information, speaking style, etc), and so on. For example, users can design a conservative policy function that protects selected complete sentences  if  necessary.   
The form of the policy function is also unlimited, and could be neural networks, regex, and so on. Please refer to Section~\ref{appendix: transformer} for contextual policy functions. }



In the case of language modeling, each record is a text sequence $\vx$, each attribute is a token $x_i$ in $x$ and $F(\vx)$ is a bit vector indicating which tokens contain private information. We define neighboring datasets under our new privacy notion as follows.  
\begin{definition} ($F$-Neighbors). $D, D'$ are two datasets and $F$ is a policy function.  $D'$ is a $F$-neighbor of $D$ if and only if $\exists r\in D$ s.t., 
$F(r)$ contains at least one private attribute, $\exists r'\in D'$ s.t., $F(r)$ and $F(r')$ differ by at least one private attribute, and $D'=D\backslash \{r\} \cup \{r'\}$. We denote $D'\in N_F(D)$.
\label{def:neighboring datasets}
\end{definition}

Under this definition, the dataset containing ``My ID is \textcolor{amber}{\hlapricot{123}}''  and the dataset containing ``My ID is \textcolor{teal}{\hllightcyan{456}}'' are neighbors; but the dataset with ``Hello there'' and the dataset with ``Hi there'' are not neighbors since they do not contain private information.

\begin{definition}
(Selective Differential Privacy). Given a policy function $F$,  
a randomized algorithm $\mathcal{M}: \mathcal{D} \rightarrow \mathcal{R}$ satisfies $(F, \epsilon, \delta)$-selective differential privacy if for $\forall D, D'\in N_F(D)$,  and  $\forall T \subseteq \mathcal{R}$, 
\begin{align*}
    \Pr[\mathcal{M}(D)\subseteq T] \leq e^{\epsilon}\Pr[\mathcal{M}(D')\subseteq T] + \delta.
\end{align*}
\end{definition}

Essentially, S-DP \wyshi{also} provides an indistinguishability property similar to canonical DP,
but only for the sensitive attributes \wyshi{in a record}. S-DP does not constrain the information leakage of nonsensitive attributes as long as the privacy of the sensitive attributes is preserved. \wyshiwyshi{Thus, S-DP protects privacy for sensitive attributes in the worst case (i.e., the attacker may have knowledge about everything except the targeted sensitive attribute.)
}


\subsection{Selective Privacy Mechanism}
With the new S-DP notion, the next step is to develop a corresponding privacy mechanism to train models that realize the new notion. Privacy mechanisms usually work by adding noise to the models to protect the data, such as Laplace mechanism (Laplace noise) and Gaussian mechanism (Gaussian noise). \citet{abadi2016deep} proposed DPSGD that adds Gaussian noise to the gradients and applies stochastic gradient descent (SGD) to train private deep learning models. In this work, we develop \textit{Selective-DPSGD}, shown in Figure~\ref{fig:model} and Algorithm~\ref{algorithm}, to train RNN-based language models that achieve S-DP. The basic idea is to first determine the private attributes with the policy function, then decide which model variables are related to the private attributes, and finally apply regular SGD on non-private variables, and  DPSGD \cite{abadi2016deep} on the private variables. \wyshiwyshi{We  choose RNNs because they are widely used in industry, e.g., \citet{ramaswamy2020training} discussed how to train private production language models with RNNs. 
}

We need to first decide the variables related to the private tokens. RNN uses a hidden state $\vh_i$ to encode the context, and outputs a distribution $\vp_i$ over a vocabulary set $V$, as shown in Equation~(\ref{eq:rnn}). If $x_i$ is private, then $\vh_i$, $\vp_{i}$, and $\mathcal{L}_{i}$  are all private; besides, to calculate $\mathcal{L}_{i-1}$, we need to access the ground truth next token $x_i$, so $\mathcal{L}_{i-1}$ is also private. The private variables are all in red in Figure~\ref{fig:model}. 
\begin{align}
\label{eq:rnn}
&\vh_i = RNN(\vh_{i-1}, x_i)\\
\label{eq:rnn1}
&\vp_i = p_{\theta}(V|\vx_{<i}) = Softmax(g(\vh_i))\\
&\mathcal{L}_{i} = -\log p_{\theta}(x_{i+1}|\vx_{<i+1})
\end{align}

\begin{figure}[h!]
    \centering
    \includegraphics[scale=0.42]{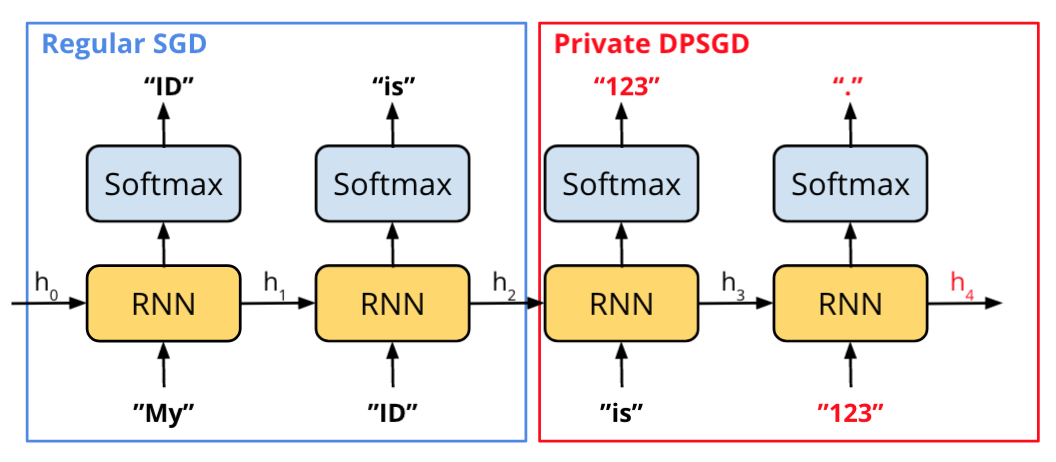}
    \caption{All private variables  are in red. We apply regular SGD on non-private variables and DPSGD on private variables in Selective-DPSGD. }
    \label{fig:model}
\end{figure}

Algorithm~\ref{algorithm} outlines the steps in \textit{Selective-DPSGD}. Given a dataset $D$, we apply a policy function $F$ to obtain a bit matrix $P = F(D)$ that indicates which tokens are private. 
At each step, we take a random batch $B$, and use $P$ to split $B$  into a sequence of non-private and private tuples $\{ (B_{np, i}, B_{p, i})\}$; then we apply SGD (regular update) on $B_{np,i}$  and DPSGD (private update) on $B_{p,i}$ alternatively, to update and protect privacy. Note that besides noise in the gradients, we also clip and add noise to the hidden state $\vh_i$ if it is private. The reason is that in RNN, if $x_i$ is private, $\vh_i$ also contains private information (as shown above), and is directly passed to the next regular update step and cannot be protected by the noise in the gradients. So it is important to add noise to protect the private information in $\vh_i$. 
Since DPSGD adds noise to the gradients, $\mathcal{L}$ and $\vp_{i}$ used to calculate the loss are protected by the noise in the gradients. In this way, all private variables are protected. 

\noindent\textbf{Privacy Guarantee.} In Section~\ref{appendix: proof}, we prove that the composition of the series of noise-adding operations ensures S-DP for  Selective-DPSGD. 

\begin{algorithm}[ht!]
\caption{Selective-DPSGD}
\small

\begin{algorithmic}[1]
\STATE \textbf{Input}: Dataset $D$ with $N$ examples, policy function $F$,  privacy bit matrix $P=F(D)$, max sequence length K,  loss function $\mathcal{L}(\theta)$.   \\ 
                        Parameters: learning rate $\eta$, noise multiplier $\sigma$, gradient norm bound $C$, group size $L$.
\FOR{t=1,2,...}
    \STATE Take a random batch $B$ of max sequence length K, with sampling probability $L/N$
    \STATE Using P, split $B$ into a sequence of non-private and private tuples $\{ (B_{np, i}, B_{p, i})\}$
    \STATE Initialize $\vh$ = $\Vec{0}$
    \FOR{i=1,2,...}
    \STATE \textbf{1) Regular update}
    \STATE $\mathcal{L}, \vh$ = Model($B_{np, i}$, $\vh$)
    \STATE $\theta \leftarrow \theta -\eta \nabla_{\theta}{\mathcal{L}(\theta)}$
    \STATE \textbf{2) Private update}
    \STATE $\mathcal{L}, \vh$ = Model($B_{p,i}$, $\vh$)
    \STATE \textbf{Calculate sample gradient}
    \STATE For each $x_j\in B_{p, i}$, compute $\vg(x_j) \leftarrow \nabla_{\theta}{\mathcal{L}(\theta, x_j)}$ 
    \STATE \textbf{Clip gradient}
    \STATE $\vg(x_j) \leftarrow \vg(x_j)/\max(1, \frac{\Vert \vg \Vert_2}{C})$ 
    \STATE \textbf{Add Noise}
    \STATE $\vg(x_j)\leftarrow\frac{1}{|B_{p, i}|}(\sum_j \vg(x_j) + \sigma C \cdot\mathcal{N}(0,\textbf{I}))$
    \STATE \textbf{Descent}
    \STATE $\theta \leftarrow \theta -\eta \nabla_{\theta}{\mathcal{L}(\theta)}$
    \STATE \textbf{Clip hidden states}
    \STATE $\vh (x_j) \leftarrow \vh(x_j)/\max(1, \frac{\Vert \vh \Vert_2}{C})$ 
    \STATE \textbf{Add Noise}
    \STATE $\vh(x_j)\leftarrow \vh(x_j) + \sigma C \cdot \mathcal{N}(0, \textbf{I})$ 
    
\ENDFOR
\ENDFOR
\end{algorithmic}
\label{algorithm}
\end{algorithm}

\section{Experiments}
We conduct our experiments on two datasets: 1) a traditional text corpus for language modeling, and 2) a  dialog dataset for a more concrete application of dialog systems. Below are the dataset details.  

\noindent \textbf{WikiText-2.} To minimize real-world harm, we choose the already-public WikiText-2 \cite{merity2016pointer}. It contains articles from Wikipedia with potentially sensitive information, and is a classical dataset for language modeling. For simplicity, we treat all the digits as privacy information. So the policy function $F$ is a digit detector: if the token is a digit, $F$ will output 0, otherwise, 1.

\noindent\textbf{\textsc{CustomerSim}.} With the emergence of virtual assistants, more and more private information is being exchanged during daily interactions. So  we also apply S-DP to build dialog systems in the customer service domain. Using real dialog transcripts may lead to real-world harm, so we simulate a dialog dataset, \textsc{CustomerSim}, with synthetic user information. The dialogs are simulated with fixed agendas and template utterances \cite{zhao2018zero}. 
We treat user name, address, phone number, order, and tracking number as sensitive information, and use regex to build a policy function to detect them.  Table~\ref{tb:dialog examples} shows one example dialog.

\wyshiwyshi{Note that although we use digits and names as running examples for sensitive information, S-DP can protect any sensitive attributes specified by the policy function. Building better policy functions  is orthogonal to S-DP, and thus beyond the scope of this paper. Any improvements on policy functions are compatible with S-DP to achieve better results.}

\begin{table}[ht]
\small
\centering
\begin{adjustbox}{width=.95\columnwidth}
\centering
\begin{tabular}{p{0.1\linewidth} | p{0.85\linewidth}}

\toprule

\multicolumn{2}{c}{

\textbf{\textsc{CustomerSim}} 

}\\

\midrule
\rowcolor{white} \textbf{Role} & \bf Utterance\\
\midrule
\rowcolor{mygray} SYS   &  Hello, I am the customer support bot. What can I do for you? \\\rowcolor{white}
USR   & Hello robot. Could you please help me track my package?\\\rowcolor{mygray}
SYS   &  Please provide your full name.\\\rowcolor{white}
USR   & Sure,  \textcolor{amber}{\hlapricot{Betty Sims}}.\\\rowcolor{mygray}
SYS   &  Could you please confirm your shipping address?\\\rowcolor{white}
USR   & Yea sure, \textcolor{amber}{\hlapricot{2241 Fitzgerald Viaduct Brownview, OK 28304}}.\\\rowcolor{mygray}
SYS   & Track your order using your tracking number,  \textcolor{amber}{\hlapricot{FH6F6GMMF4}}. Are you happy about my answer? \\\rowcolor{white}
USR   &  That's it.\\
\bottomrule

\end{tabular}
\end{adjustbox}
\caption{An example dialog in  \textsc{CustomerSim}.}
\label{tb:dialog examples}
\end{table}

\begin{table*}[!htbp]
    \centering
    \resizebox{\textwidth}{!}{
    \begin{tabular}{l|cccccc|cccccc}
    \midrule
    &\multicolumn{6}{c|}{\textbf{WikiText-2}} & \multicolumn{6}{c}{\textbf{\textsc{CustomerSim}}}\\
\midrule
 \textbf{Model}
 & \textbf{PPL} 
 & \textbf{AccT1} 
 & \textbf{$\sigma$}   &  \textbf{$C$}   &  \textbf{$\epsilon$} &  \textbf{$\delta$} 
 & \textbf{PPL} 
 & \textbf{AccT1}
 & \textbf{$\sigma$}   &  \textbf{$C$}   &  \textbf{$\epsilon$} &  \textbf{$\delta$} 
 \\ 
 \midrule

LSTM, No-DP 
&60.98 $\pm$ 0.48 & 0.36 $\pm$ 0.00 
& - &- &- &- 
&3.06 $\pm$ 0.01& 0.75 $\pm$ 0.00
&- &- &- &- 
\\

\midrule
DPSGD 
& 305.86 $\pm$ 3.00 & 0.27 $\pm$ 0.00 
& 0.50   & 0.10  & 4.89    & 8e-5        
& 11.82 $\pm$ 0.76 & 0.70 $\pm$ 0.01 
& 0.60   & 0.01  & 2.51    & 8e-5        

  \\ 
\midrule

S-DPSGD 
& 160.04 $\pm$ 4.86 &  0.31 $\pm$ 0.00   
& 0.50   &   1e-3 & 4.91    & 8e-5       


& 10.42 $\pm$ 0.91  & 0.69 $\pm$ 0.01 
& 0.70&  5e-3 & 2.74    & 8e-5        
\\
\midrule
\end{tabular}
}
    \caption{Model utility and privacy guarantee on WikiText-2 (left) and \textsc{CustomerSim} (right). \textbf{PPL}: Perplexity on the test set. \textbf{AccT1}: Top-1 next word prediction accuracy. \textbf{$\sigma$}: Noise multiplier in the Gaussian noise. \textbf{$C$}: Clipping threshold. \textbf{$\epsilon, \delta$}: Privacy guarantee in ($\epsilon$, $\delta$)-privacy.
    }
    \label{tab:model utility:wiki}
\end{table*}

\noindent \textbf{Model training details.} We use one-layer LSTMs with an embedding size of 200 and a hidden size of 200, 
and a BPE tokenizer \cite{sennrich2015neural}  to avoid information leakage from the tokenizer: with BPE, a secret ``1234'' will be split into multiple tokens, e.g., ``12'' and ``34'', while traditional tokenizer will release ``1234'' in the dictionary. All private states (hidden, cell states) in the LSTMs are protected.


\noindent \textbf{Baselines.} We have two baselines, one without DP (``No-DP''), and the other trained with DPSGD (``DPSGD''). We refer to our models trained with S-DPSGD as ``S-DPSGD''.  ``No-DP''  is simply an LSTM optimized with a regular SGD and a starting learning rate (lr) of 20. \wyshi{The learning rate was annealed and decreased as training proceeded.} 
``DPSGD'' is optimized with DPSGD and a starting learning rate of 0.05.  All the models are trained five times to reduce randomness, and the parameters are tuned on the validation set. \wyshiwyshi{We compare with DPSGD because it's the backbone of most of existing DP learning algorithms. Existing modifications of DPSGD are mainly focused on optimization algorithms and objectives, thus are compatible with our work that tailors the privacy notions to realistic privacy needs in the NLP context.} 

\subsection{Evaluation}

We evaluate both the language models' utilities and privacy protection levels. 
We use perplexity (PPL) and the top-1 next word prediction accuracy (AccT1) to measure model utility. To measure privacy protection levels, besides reporting the theoretical privacy budget $\epsilon$ and $\delta$, we also perform various practical attacks on the trained models and report how successful the attacks are against different techniques. We compare the performance of our proposed privacy-preserving learning technique and the baselines in terms of the privacy-utility trade-off. Specifically, we compare the utility between different techniques at a \emph{given} privacy protection level, or vice versa.

\begin{figure*}[!htbp]
    \centering
    \includegraphics[height=4.5cm, width=\textwidth]{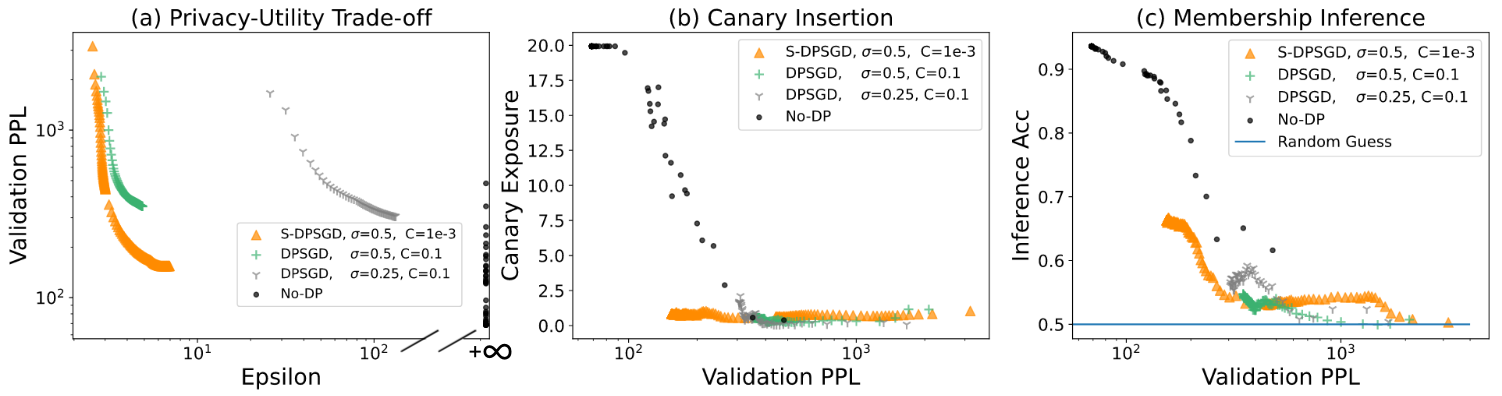}
    \caption{Privacy-utility trade-off, canary insertion attack and membership inference attack on WikiText-2.}
    \label{fig:three together,wiki}
\end{figure*}

\subsubsection{Attack Details}
We perform two types of attacks: 1) canary insertion and 2) membership inference. 

\noindent\textbf{Canary insertion} is proposed by \citet{carlini2019secret}. It first inserts random sequences called canaries into the training dataset, then trains the model, and finally calculates the following \textit{exposure} for the inserted canaries to measure if the model will unintentionally memorize these canaries. Canaries are of a specific format, e.g., s=``The number is \dottedcircle\dottedcircle\dottedcircle\dottedcircle\dottedcircle\dottedcircle'', where \dottedcircle\space are filled with random values from a randomness space $\mathcal{R}$ such as  a space of digits from 0 to 9. To obtain the \textit{exposure}, we enumerate all possible sequences in the specified form, and calculate the negative log-rank with an additional constant, as shown below. Lower exposure indicates the model is more private.  In our setting, we insert the secret ``My ID is \textcolor{amber}{\hlapricot{341752}}'' into the training data for 10 times to make the differences between models more salient.

\begin{definition}
Given a canary $s[r]$, a model with parameters $\theta$, and the randomness space $\mathcal{R}$, the \textbf{exposure} of $s[r]$ is
\begin{align*}
\vspace{-0.5em}
    \textbf{exposure}_\theta = \log_2|\mathcal{R}|-\log_2\textbf{rank}_\theta(s[r])
    \vspace{-0.5em}
\end{align*}
\end{definition} 

\noindent\textbf{Membership Inference} is a widely used attack method that identifies if a given sample is a member of the training dataset. Lower inference accuracy means that it is harder to infer a member from the model and thus the model is safer.  \citet{carlini2020extracting} proposed an advance membership inference attack for language models. The basic idea is to calculate the given samples' perplexities under the model, rank them and choose the ones with the lowest perplexities (highest likelihood by the model). In our experiments, we randomly select 500 protected secrets from the training set, and randomly generate 500 samples of similar format, to form a dataset for the membership inference attack, so a random guess would give us an accuracy of 50\%. For  WikiText-2, the secrets are digit sequences; for \textsc{CustomerSim},  customer names are the secrets. 



\subsection{WikiText-2 Results}

\begin{figure*}
    \centering
    \includegraphics[height=4.5cm, width=\textwidth]{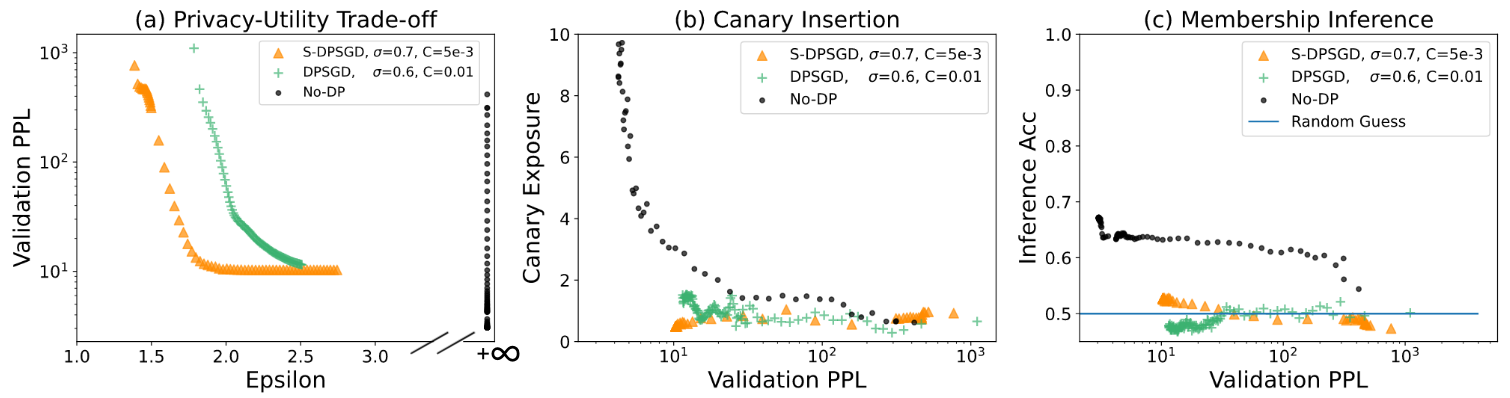}
    \caption{Privacy-utility trade-off, canary insertion attack and membership inference attack on \textsc{CustomerSim}. }
    \label{fig:model utility: dialog}
\end{figure*}

\noindent\textbf{Model utility and privacy guarantee.} The left part of Table~\ref{tab:model utility:wiki} shows different models' utilities and privacy guarantees on WikiText-2 and Figure~\ref{fig:three together,wiki}(a) shows the privacy-utility trade-off, where utility is represented by the validation perplexity (lower PPL=higher utility) and the privacy budget is represented by the epsilon (smaller $\epsilon$=more privacy). Although the definitions of $\epsilon$ in DPSGD and S-DPSGD are different, the $\epsilon$ in both cases provides a \emph{tight} theoretical upper bound on how much the sensitive attributes are leaked through the trained models. Hence, the $\epsilon$ associated with two algorithms are comparable.  Given a fixed privacy budget $\epsilon$, we want to achieve a higher model utility. Because the privacy budget $\epsilon=+\infty$ for the No-DP model, ``No-DP'' is represented by the vertical black line on the far right in Figure~\ref{fig:three together,wiki}(a), and it achieves the best average PPL of 60.98 on the test set. The orange line is our S-DPSGD model, and it achieves the second-best average test PPL of 160.04 with $\epsilon=4.91$.  With a similar $\epsilon=4.89$, DPSGD has the worst average test PPL of 305.86, much worse than S-DP because canonical DP notion protects the whole data and is over-pessimistic (see Section~\ref{appendix: param search} for models with different parameters).  \wyshi{The gray line is for DPSGD with a smaller $\sigma=0.25$, a convergent PPL of 266.6 and a final $\epsilon=132.73$. Compared to DPSGD with $\sigma=0.25$,  it achieves a better PPL but with a much higher cost of privacy leakage (larger $\epsilon$). But with S-DPSGD, we can also achieve lower PPL without hurting privacy.}

\noindent\textbf{Attack results.} Figure~\ref{fig:three together,wiki}(b) and~\ref{fig:three together,wiki}(c)  show the canary insertion attack and membership inference attack results on WikiText-2. The x-axis is the models' utilities measured by validation PPL, and the y-axis is the exposure and membership inference accuracy indicating the success level of the attacks. Lower exposure and lower accuracy indicate a safer model.  \wyshi{We want to see at a given robustness level to the attacks, which models can achieve lower perplexity, i.e., higher utility.}

For canary insertion attack, although ``No-DP'' achieves lower perplexity,  its exposure can go up to 20, indicating that the inserted canary could be easily revealed by the attackers. If we compare No-DP with S-DPSGD with similar utilities, S-DPSGD is always below No-DP, meaning S-DPSGD achieves much smaller exposure, and hence a safer model with similar utility. Comparing DPSGD and S-DPSGD, we find that S-DPSGD achieves much better model utility at a given exposure. 

For membership inference attack, we draw a horizontal line of 0.5 to show the random guess performance.  
Again,  S-DPSGD is always below No-DP, showing that with similar utilities,  models trained with S-DPSGD are safer than No-DP under the membership inference attack. 
As mentioned earlier, DPSGD with $\sigma=0.5$ (green)  and S-DPSGD (orange) have similar privacy budget ($\epsilon$=4.89 and 4.91 respectively). 
Comparing these two, we see that given a similar privacy budget, S-DPSGD  converges to a much lower perplexity while remaining safe to the attack and thus achieves a wider range for the privacy-utility trade-off tuning.  
We also observe that for ``No-DP'', the inference accuracy can go up to 90\%, suggesting that language models without DP protection are vulnerable to attacks.

\subsection{\textsc{CustomerSim} Results}

This section shows the results on \textsc{CustomerSim}. 

\noindent\textbf{Model utility and privacy guarantee.} 
The right part of Table~\ref{tab:model utility:wiki} and Figure~\ref{fig:model utility: dialog}(a) show the privacy-utility trade-off for \textsc{CustomerSim}. Because the dialogs are simulated with templates and relatively simple, the perplexity can be as low as 3.06 for No-DP.  S-DPSGD still achieves better perplexity than DPSGD (10.42 vs. 11.82) with similar $\epsilon$, but the gap is smaller compared to WikiText-2 because there are more sensitive tokens in \textsc{CustomerSim} (18.3\%) than WikiText-2 (2.8\%), and the advantage of protecting selective tokens only is not as big. 

\noindent\textbf{Attack results.} 
Figure~\ref{fig:model utility: dialog}(b) and~\ref{fig:model utility: dialog}(c) show the results of the canary insertion  and membership inference attack on \textsc{CustomerSim} respectively. 

For canary insertion,  we observe that given the same utility, S-DPSGD achieves lower exposure than No-DP and DPSGD. Although the improvement seems small on absolute values, we should note that exposure is on log-scale, so the improvement on relative rank is also on log-scale (e.g., for exposure=1.2, the rank of the canary is 429881; for exposure=0.2, the rank is 858215, roughly twice). 


The membership inference accuracy is only a little better than random guess probability (60+\%), so it is not successful on \textsc{CustomerSim}. 
One potential reason could be that customer names only appear once in each dialog and can be very similar to each other (e.g., Emily, Emilya, and Emilyah). We leave it as future work to develop better membership inference attacks towards similar secrets. Under the failed attack, S-DPSGD is still better than No-DP. It is not feasible to compare S-DPSGD and DPSGD as both are close to random guess.  

\subsection{Data Anonymization and Selective-DP}

\wyshiwyshi{Data anonymization (or data de-identification) has been widely used to protect data privacy, where the sensitive tokens are masked with special tokens such as ``<num>''. 
Both data anonymization and S-DP \wyshi{target protection towards sensitive attributes}, and rely on a policy function to detect the private information. 
However, they are different in that data anonymization masks the sensitive attributes completely so nothing can be learned from them, while S-DP noises the private portion and provides a tunable way to adjust the privacy-utility trade-off, evidenced by experiments in Section~\ref{sec: privacy-utility trade-off}.}

\wyshiwyshi{One common problem for both methods is that the policy function is not guaranteed to  be perfect and can miss to detect some sensitive information. So we also compare their performance when the policy function is imperfect in Section~\ref{sec:imperfect policy function}. }

\subsubsection{S-DP achieves better utility}
\label{sec: privacy-utility trade-off}
\begin{figure}[!htbp]
    \centering
    \includegraphics[scale=0.4]{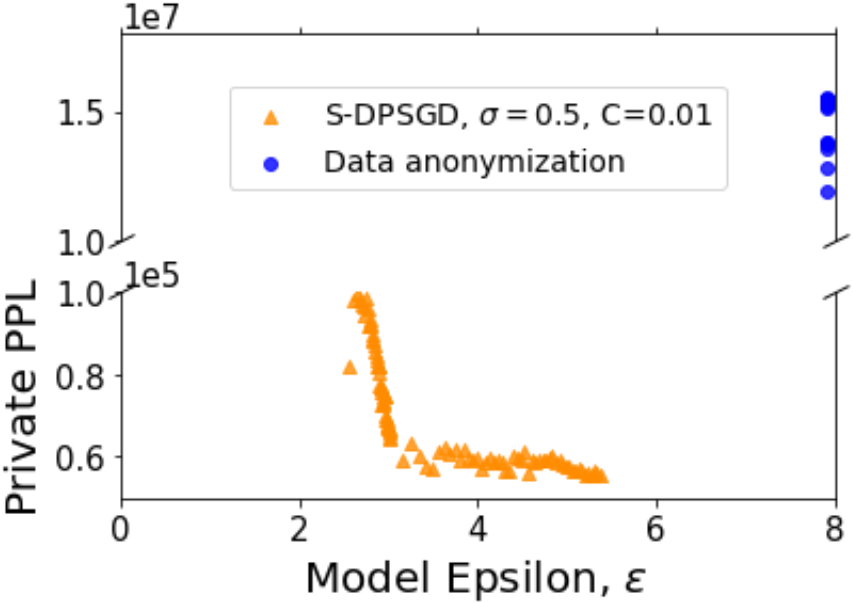}
    \caption{Perplexity on private tokens over $\epsilon$ for data anonymization and S-DPSGD.  }
    \label{fig:data anonymization}
\end{figure}


We mask all the detected digits by ``<num>'' to train a data anonymization baseline on WikiText-2, calculate the perplexity on the private tokens, and present the result in Figure~\ref{fig:data anonymization}. 
The x-axis is the privacy budget $\epsilon$, and the y-axis is the private-token perplexity.  For data anonymization, the dots are all on the far right because $\epsilon$$=$$+\infty$. The first observation is that since data anonymization simply masks the sensitive tokens, it fails to learn anything about them, resulting in a much worse PPL on private tokens. This makes S-DP a good alternative to data anonymization because S-DP can still learn certain patterns from the noised sensitive tokens. Also, Figure~\ref{fig:data anonymization} shows that for S-DP, there is a trade-off between $\epsilon$ (privacy) and private-token PPL (utility), so we could tune $\epsilon$ to achieve a better private-token PPL, while for data anonymization, there is no way to tune the parameters for better model utilities. 
\wyshi{More concretely, 
with proper $\epsilon$ and $\delta$, S-DP might learn the structure of sensitive attributes (e.g., XXX-XX-XXXX for SSN) without knowing the exact value, or even learn the distribution of values for each digit, and such knowledge could be useful for data analytics. But for data anonymization, the model either sees the digit or doesn’t see it, so there is no knob to tune the privacy-utility trade-off.}

\subsubsection{Imperfect Policy Function
}
\label{sec:imperfect policy function}

\begin{figure}[!htb]
    \centering
    \includegraphics[scale=0.4]{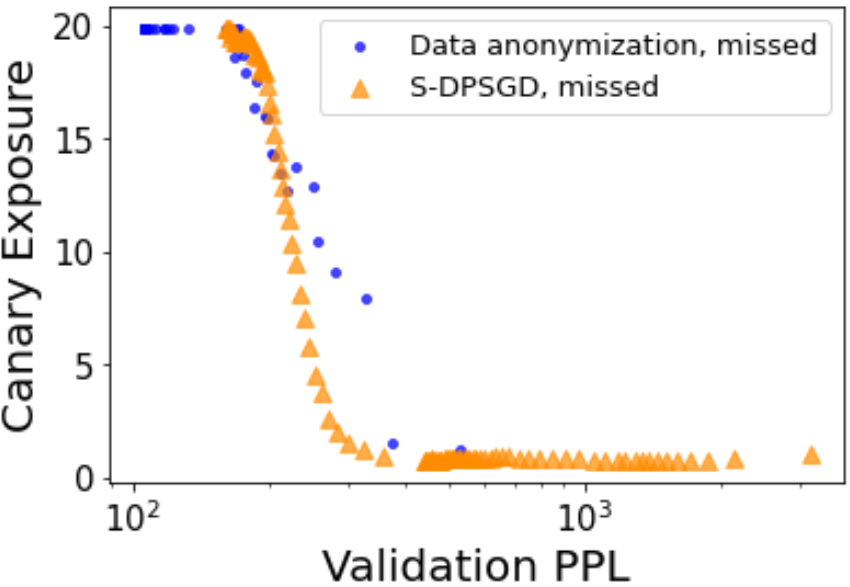}
    \caption{Canary insertion attack for data anonymization and S-DPSGD when missing the canary.}
    \label{fig:missed, canary}
\end{figure}

Now we discuss the performance of both methods when the policy function is imperfect. We still use WikiText-2 with the secret ``My ID is \textcolor{amber}{\hlapricot{341752}}''. The policy function  fails to detect ``341752'' as a secret. For data anonymization, all the detected digits are masked by ``<num>''; for S-DP, we apply S-DPSGD to noise the detected digits.



Figure~\ref{fig:missed, canary} shows the exposure of the missed secret \textcolor{amber}{\hlapricot{341752}}. When the perplexity gets lower, the model becomes better at remembering the details of the training data, so the exposure of both models becomes high. But when the perplexity is around the middle area, S-DP has lower exposure than data anonymization, meaning it's  safer to attacks.

\wyshiwyshi{Note that the risk with imperfect policy functions is common to many privacy-preserving techniques, and how to build better policy functions  is orthogonal to this work on S-DP, and thus beyond the scope of this paper. Improvements on policy functions (better sensitive information detection) are compatible with  S-DP and can be used to further improve the results. We are also actively working on this topic in parallel to the S-DP work. 
}

\section{Preliminary Results on Contextual Policy Function and Transformer}
\label{appendix: transformer}

\begin{table}[!htbp]
\centering
\begin{adjustbox}{width=.95\linewidth}
\begin{tabular}{l|p{0.5\linewidth}|l|ll}
\toprule
\textbf{Model}        & \textbf{Policy function}                                               & \textbf{Portion}                  & $\epsilon$ & \textbf{PPL}   \\
\midrule
GPT2 + no DP & -                                                             & -                        & -                       & 20.47 \\
DPSGD        & -                                                             & -                        & 2.58                    & 27.05 \\
\midrule
Redacted     & \multirow{2}{*}{\shortstack[l]{Noncontextual:\\ All entities}}                                 & \multirow{2}{*}{16.40\%} & -                       & 24.30 \\
S-DP          &                                                               &                          & 2.58                    & 22.56 \\
\midrule
Redacted     & \multirow{2}{*}{\shortstack[l]{Contextual: entities, sub, \\ obj, propon, pron}} & \multirow{2}{*}{34.80\%} & -                       & 38.66 \\
S-DP          &                                                               &                          & 2.48                    & 25.61\\
\bottomrule
\end{tabular}
\end{adjustbox}
\caption{Preliminary results on different policy functions and Transformers. }
\label{tab:transformers}
\end{table}

In this section, we present the preliminary results on different policy functions and large Transformer models \cite{vaswani2017attention} on WikiText-2. 

We design two policy functions: one is non-contextual and protects all the 18 named entities detected by spacy \cite{spacy2} such as person, date, locations, etc (16.4\% tokens)\footnote{The full list of entities is available here \url{https://spacy.io/usage/linguistic-features\#named-entities}}; the other one is contextual that protects all the entities plus subjects, objects, proper noun and pronouns of all the sentences (34.8\% tokens). We use these two policy functions to redact the WikiText-2 $D$ and obtain a redacted version $D'$. We first fine-tune a GPT2-small model \cite{radford2019language} on $D'$ (denoted as ``redacted model''), and then further fine-tune this redacted model on the original $D$ \cite{shi2022just}. The results are in Table~\ref{tab:transformers}. The redacted models trained on the redacted data $D'$ achieve 24.30 and 38.66 in perplexity for the two  policy functions respectively. If we fine-tune these two models on the original private data $D$ with DPSGD, we can further improve the perplexity to 22.56 and 25.61, while the state-of-the-art DP language models only achieve 27.05 with similar privacy budget. These  results show that our S-DP notion is promising in boosting utility of privacy-preserving language models even if one-third of the tokens are considered sensitive. 

\section{Conclusions}
To conclude, we develop a new privacy notion, \emph{selective differential privacy} (S-DP), to improve model utility while providing rigorous privacy guarantees for the sensitive portion of the data. We also develop a privacy mechanism, Selective-DPSGD, to achieve the new S-DP notion for RNNs. We experiment with WikiText-2 and a synthetic customer service dialog dataset.  Results on both tasks show that models trained with S-DP achieve better utilities than traditional DP, and are more robust under various attacks than models without DP protection. 
With S-DP, we march one step closer towards safer and better language models and hope to inspire more related research. 
Moreover, S-DP could be applied to domains beyond NLP where only partial data require protection, such as image recognition. Please see Section~\ref{appendix: ethical} for ethical consideration.




\bibliography{anthology,custom}
\bibliographystyle{acl_natbib}
\clearpage
\appendix

\section{Appendix}
\label{sec:appendix}

\subsection{Ethical Consideration}
\label{appendix: ethical}
\textbf{Data Usage.} To minimize real-world harm, we choose WikiText-2 since it is already public and widely used, and synthesize the dialogs as well as the personal information in the \textsc{CustomerSim} datasets in our experiments. For future research, we plan to continue using public or synthetic datasets to prevent real-world data leakage.

\noindent\textbf{Application.} Our work addresses the problem of data privacy protection and can be applied in different applications. The attacks used in our study are well-known standard attacks tailored for our specific tasks, so it's hard to generalize and misuse them to attack other language models. We will release the code so that people can have access to the various algorithms and protect their own data. 

\subsection{Limitations}
There are many spaces for improvements for this S-DP work. For instance, when the policy function fails to detect sensitive attributes, their privacy may not be guaranteed, and therefore, we plan to develop better policy functions and employ privacy amplification \cite{balle2018privacy}. Also,  
besides explicit private information like names, we plan to protect sensitive context such as ``I have two kids'', as these can often happen causally in dialogs but still reveal personal status. In Section~\ref{appendix: transformer} we show some preliminary results on protecting contexts and plan to further refine the contextual policy functions. 

\subsection{Privacy Analysis}
\label{appendix: proof}
We analyze the private guarantees of Selective-DPSGD in this section. 

For any given dataset $D$, let $D_{i,j}$ denote the $j$th attribute of the $i$-th record. We abstract the gradient update and hidden state into a query function $f(x,w)$ which takes training data $x$ and auxiliary information $w$ as input. We introduce $w$ as an additional input to $f$  to model the dependence of the gradient update and hidden state on the model parameters at the previous rounds. We define the following two types of queries on the dataset.
\begin{itemize}
    \item Type-1 query: the input $x$ to $f$ consists of only private attributes with respect to the policy function $F$
    \item Type-2 query: the input $x$ to $f$ consists of only non-private attributes with respect to the policy function $F$
\end{itemize}

Since S-DP only hides the presence of private attributes, type-2 query does not incur privacy loss. 




The following theorem shows that if a type-1 query has the property that its output is bounded, then for arbitrary auxiliary input, adding Gaussian noise into the query can provide DP. The reason why we consider such queries is that clipped gradient and hidden state can be modeled as such queries. The reason for which we want to analyze DP guarantees under arbitrary auxiliary inputs is that at any given round, the model parameters resulting from previous rounds could be arbitrarily different. This is because the non-sensitive part of two $F$-neighboring datasets could be arbitrarily different.

\begin{theorem}
\label{thm:bound}
Assume that $\max_{x,w}\|g(x,w)\|\leq C$. Then, for any arbitrary $w$, adding Gaussian noise $\Delta =\mathcal{N}(0,\sigma^2)$ proportional to $C$ into $g$ can ensure $(\epsilon,\delta)$-DP where $\epsilon,\delta$ depends on $C$ and $\sigma$. More formally, for all neighboring datasets $x$ and $x'$ and all $w,w'$,
\begin{align}
    \frac{P[g(x,w)+\Delta=r]}{P[g(x',w')+\Delta=r]}\leq e^\epsilon \quad \text{w.p. } 1-\delta 
\end{align}
\end{theorem}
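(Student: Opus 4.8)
The plan is to reduce the statement to the standard $(\epsilon,\delta)$ analysis of the Gaussian mechanism, isolating the one non-routine step: obtaining a sensitivity bound when the auxiliary inputs $w, w'$ are allowed to differ completely arbitrarily. The key observation is that the hypothesis $\max_{x,w}\|g(x,w)\| \leq C$ is a uniform bound on the \emph{range} of $g$, and a range bound converts into a sensitivity bound for free. Indeed, by the triangle inequality, for any neighboring $x, x'$ and any $w, w'$,
\begin{align*}
\|g(x,w) - g(x',w')\| \leq \|g(x,w)\| + \|g(x',w')\| \leq 2C.
\end{align*}
This is exactly what makes the argument survive the arbitrariness of $w, w'$: we never need $g$ to be continuous or Lipschitz in its inputs, so the fact that the non-sensitive part of two $F$-neighboring datasets can be arbitrarily different is harmless. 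The uniform norm bound alone controls how far apart the two noiseless outputs can be.

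Given this, I would write the noise as $\Delta \sim \mathcal{N}(0, \sigma^2 C^2 \mathbf{I})$ (matching the $\sigma C \cdot \mathcal{N}(0,\mathbf{I})$ used in Algorithm~\ref{algorithm}) and analyze the privacy-loss random variable
\begin{align*}
\mathcal{L}(r) = \ln \frac{P[g(x,w) + \Delta = r]}{P[g(x',w') + \Delta = r]}.
\end{align*}
Writing $\mu = g(x,w)$ and $\mu' = g(x',w')$, both densities are Gaussians with common covariance $\sigma^2 C^2 \mathbf{I}$ and means $\mu, \mu'$, so the quadratic terms in $r$ cancel and $\mathcal{L}(r)$ is affine in $r$. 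Substituting $r = \mu + \Delta$ then shows $\mathcal{L}$ is itself a Gaussian random variable whose mean and variance are governed by $\|\mu - \mu'\| \leq 2C$ and the noise scale $\sigma C$; note that the dependence on $C$ cancels up to the signal-to-noise ratio $\|\mu-\mu'\|/(\sigma C) \leq 2/\sigma$, which is consistent with (and slightly stronger than) the claimed dependence on $C$ and $\sigma$.

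It then remains to apply the standard Gaussian tail bound: for a noise multiplier chosen large enough relative to the sensitivity $2C$, the event $\mathcal{L}(r) > \epsilon$ occurs with probability at most $\delta$, which is precisely the high-probability ratio bound $P[g(x,w)+\Delta=r]/P[g(x',w')+\Delta=r] \leq e^\epsilon$ with probability $1-\delta$ asserted in the theorem, the $(\epsilon,\delta)$ pair being determined through the closed-form Gaussian-mechanism relation. I expect the only genuine obstacle to be the sensitivity step above, namely recognizing that a worst-case range bound substitutes for any continuity of $g$ in $w$; everything after that is the textbook $(\epsilon,\delta)$ analysis of the Gaussian mechanism and introduces no new ideas.
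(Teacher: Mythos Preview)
Your proposal is correct and follows essentially the same route as the paper: bound the sensitivity using the uniform norm hypothesis, then invoke the standard Gaussian-mechanism analysis. The paper's own proof is a single sentence stating that the result ``follows directly from the classic proof for DP guarantees of the Gaussian mechanism by noticing the sensitivity of $f$ is bounded by $C$,'' whereas you spell out the privacy-loss computation explicitly; one small difference is that your triangle-inequality step yields sensitivity $2C$ rather than the $C$ the paper asserts, and $2C$ is indeed the correct constant under the stated hypothesis when $w$ and $w'$ are allowed to differ arbitrarily.
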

The proof follows directly from the classic proof for DP guarantees of the Gaussian mechanism~\cite{mironov2017renyi, dwork2014algorithmic} by noticing the sensitivity of $f$ is bounded by $C$.

The regular updates in Algorithm~\ref{algorithm} take as input non-private data $B_{np,i}$. Hence, they are type-2 queries and do not incur extra privacy loss. The private updates in Algorithm~\ref{algorithm} (i.e., gradient and hidden states) depend on private attributes and model parameters from previous rounds, and thus belong to the type-1 query. Moreover, they satisfy the bounded norm assumption in Theorem~\ref{thm:bound}. We call the resulting query of adding Gaussian noise into a type-1 query with bounded norm property a \emph{noisy type-1 query}.
Overall, Algorithm~\ref{algorithm} is essentially the composition of multiple type-1 and noisy type-2 queries. In the following, we will present a general result for the privacy guarantees resulting from the composition. 

\begin{theorem}
\label{thm:dp}
Let $f$ be the composition of $k$ queries: $f_1,\ldots, f_k$, which are either noisy type-1 or type-2 queries.
Given a policy function $F$, let $\mathbf{f}_p$ denote the set of noisy type-1 queries.
Let $\mathbf{f}_{np}$ denote the set of type-2 queries. Then, if $\mathbf{f}_p$ is $(\epsilon,\delta)$-DP, $f$ is $(F,\epsilon,\delta)$-S-DP.
\end{theorem}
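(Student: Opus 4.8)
The plan is to reduce the S-DP guarantee for the full composition $f$ to the ordinary DP guarantee already assumed for the private queries $\mathbf{f}_p$, by exploiting the definition of $F$-neighbors together with the behavior of type-2 queries. First I would fix two arbitrary $F$-neighboring datasets $D, D' \in N_F(D)$. By Definition~\ref{def:neighboring datasets}, $D$ and $D'$ differ in exactly one record, and that difference is confined to private attributes; the non-private attributes of the differing records (and all other records) coincide. The goal is to show that the output distributions of $f(D)$ and $f(D')$ are $(\epsilon,\delta)$-indistinguishable.

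The key structural observation is that each query $f_j$ in the composition is either a type-2 query, whose input consists only of non-private attributes, or a noisy type-1 query. Because $D$ and $D'$ agree on all non-private attributes, every type-2 query receives \emph{identical} input data under the two datasets, and hence contributes no distinguishing information whatsoever to the output. The only queries that can differ are the noisy type-1 queries in $\mathbf{f}_p$, whose inputs are private attributes. Thus I would argue that, conditioned on the non-private portions being fixed, the joint distribution of $f$ is entirely determined by the distribution of $\mathbf{f}_p$ together with a data-independent (given the shared non-private data) deterministic map.

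The central subtlety—and the place where Theorem~\ref{thm:bound} does the heavy lifting—is that the type-1 queries are \emph{adaptive}: the input $w$ to each noisy type-1 query depends on the outputs of all previous queries, including the type-2 queries, whose values may themselves differ in value across rounds even though their input data is shared (since they reflect the evolving model parameters). This is precisely why Theorem~\ref{thm:bound} is stated for \emph{arbitrary} auxiliary inputs $w, w'$: it guarantees that each noisy type-1 query is $(\epsilon_j,\delta_j)$-DP uniformly over all possible auxiliary inputs, so the indistinguishability of the private queries survives the arbitrary divergence of the non-private state across the two runs. I would therefore invoke an adaptive composition theorem for differential privacy, applied only to the sequence of noisy type-1 queries $\mathbf{f}_p$, using the hypothesis that $\mathbf{f}_p$ is $(\epsilon,\delta)$-DP. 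Since the type-2 queries inject no additional privacy loss and the worst-case-over-$w$ guarantee of Theorem~\ref{thm:bound} neutralizes the dependence on the arbitrarily-differing non-private state, the overall composition $f$ inherits the $(\epsilon,\delta)$ bound, now interpreted against $F$-neighbors, which is exactly $(F,\epsilon,\delta)$-S-DP.

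The main obstacle I anticipate is making the interleaving argument fully rigorous: one must carefully track that the only channel through which the private data difference propagates into the output is through the noisy type-1 queries, and that the divergence of the non-private intermediate state between the two runs is fully absorbed by the ``arbitrary $w$'' formulation of Theorem~\ref{thm:bound}. A clean way to do this is to define, for each run, a coupled sequence of auxiliary inputs and then apply the post-processing invariance of DP together with adaptive composition, treating the type-2 queries as (randomized) post-processing that does not touch the sensitive attributes. Verifying that this framing genuinely captures the data flow in Algorithm~\ref{algorithm}—in particular that the clipped-and-noised hidden state $\vh$ passed into the subsequent regular update is itself a noisy type-1 query output and thus already protected—is the step requiring the most care.
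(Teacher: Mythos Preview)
Your proposal is correct and takes essentially the same approach as the paper. The paper also fixes two $F$-neighbors, observes that type-2 queries take identical non-private data and therefore contribute no privacy loss, and then bounds the remaining product over the noisy type-1 queries $\mathbf{f}_p$ by the assumed $(\epsilon,\delta)$-DP guarantee, invoking Theorem~\ref{thm:bound}'s arbitrary-auxiliary-input property exactly as you do; the only cosmetic difference is that the paper writes this as a direct likelihood-ratio factorization on the full transcript $(y_1,\ldots,y_k)$ rather than in the language of adaptive composition and post-processing.
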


\begin{proof}
Consider two selective $F$-neighboring datasets $x$ and $x'$. Let $x_i$ and $x_i'$ be the subset of data utilized by $f_i$. $x_i$ contains only private attributes when $f_i$ is type-1 and contains only non-private attributes when $f_i$ is noisy type-2. By the neighboring relation between $x$ and $x'$, $x_i$ and $x_i'$ are also selective $F$-neighbors when $f_i$ is a type-1 query. In addition to the dataset, $f_i$ takes the output of all the previous queries. For a fixed outcome $(y_1,\ldots, y_k)$ of $f$, we have
\begin{align}
    &\frac{P[f_1(x_1,w_1)=y_1,\ldots,f_k(x_k,w_k)=y_k]}{P[f_1(x_1',w_1')=y_1,\ldots,f_k(x_k',w_k')=y_k]}\\
   = &\prod_{f_i \in \mathbf{f}_p} \frac{f_i (x_i,w_i)}{f_i (x_i',w_i')}\\
   & \leq e^\epsilon\quad  \text{w.p. } 1-\delta 
\end{align}
as desired. 

The equality in the second line is due to the fact that $\mathbf{f}_{np}$ does not incur privacy loss and the independence of randomness of each query given the output of all the previous queries. The inequality in the third line is due to the assumption that $\mathbf{f}_p$ is $(\epsilon,\delta)$-DP with arbitrary auxiliary input.
\end{proof}



Instantiating the type-1 and type-2 queries in Theorem~\ref{thm:dp} with the regular and private updates defined in Algorithm~\ref{algorithm} yields the privacy guarantees for Algorithm~\ref{algorithm}. Theorem~\ref{thm:dp} provides a convenient way of calculating the S-DP guarantees for Algorithm~\ref{algorithm}: one can apply off-the-shelf privacy composition tools to calculate the overall DP guarantees for all the private updates and then the entire algorithm satisfies S-DP with the same values of $\epsilon$ and $\delta$. Specifically, in this paper, we leverage moment accountant~\cite{abadi2016deep} to calculate the DP guarantees for the composition of all privacy queries.

\subsection{Models with Different Parameters}
\label{appendix: param search}
We plot the performances of models with different parameters in Figure~\ref{fig:model utility: wiki, param}. We find that fixing the noise multiplier $\sigma=0.5$, the clipping threshold $C$ has a big impact on the performance, and it cannot be too big (0.01) or too small (5e-6); if we fix $C$ and change $\sigma$ from 0.5 to 0.1, the perplexity will be lower but at a huge cost of the privacy budget $\sigma$. As expected, there is always a trade-off between the utility and the privacy spent, so we choose a balancing point with a reasonable utility and privacy guarantee ($\sigma=0.5$ and $C=$1e-3) for the main experiments.

\begin{figure}[!htb]
    \centering
    \includegraphics[scale=0.4]{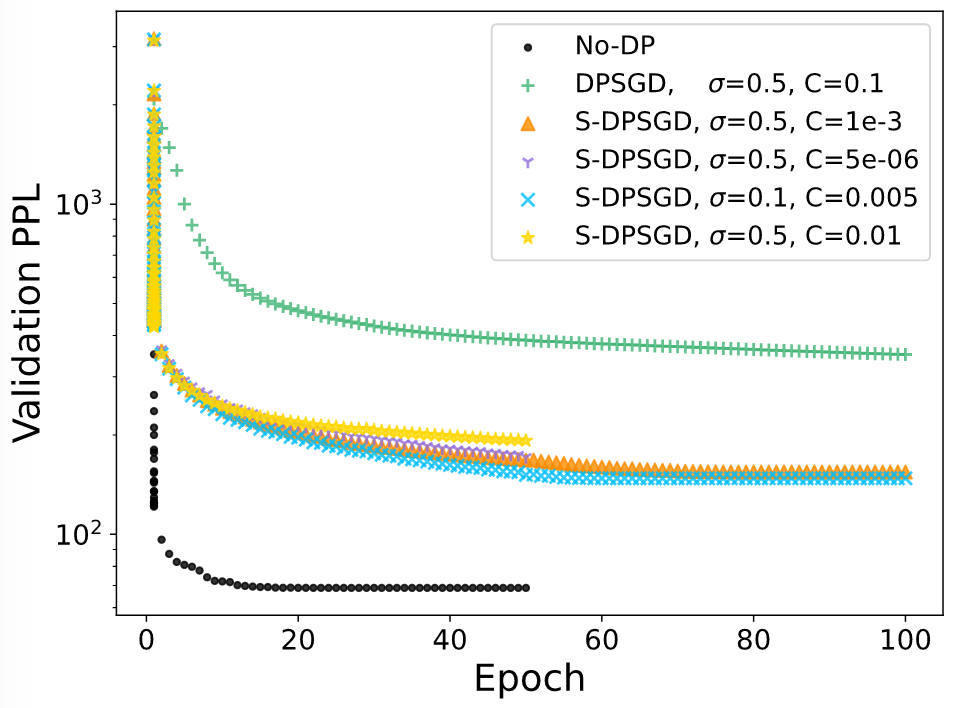}
    \caption{Validation perplexity over epochs on WikiText-2 for models with different parameters. }
    \label{fig:model utility: wiki, param}
\end{figure}

\subsection{Membership Inference on \textsc{CustomerSim}}
\begin{figure}[!htb]
    \centering
    \includegraphics[scale=0.5]{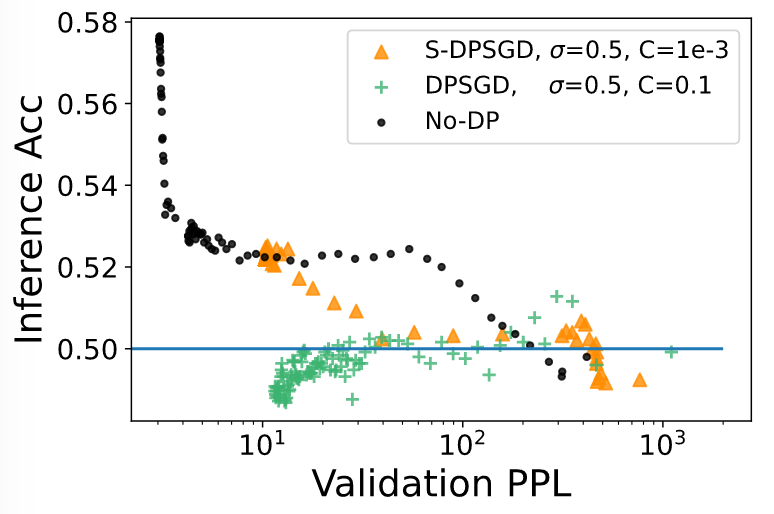}
    \caption{Original membership inference results on \textsc{CustomerSim}. The best inference accuracy is around 58\%.}
    \label{fig:original mem inf, dialog}
\end{figure}

The original membership inference attack doesn't achieve good results on \textsc{CustomerSim}. Figure~\ref{fig:original mem inf, dialog} shows the original membership inference result on \textsc{CustomerSim}. The best inference accuracy is around 58\%. So we employ a more advanced version, where we first perform the attack on 1000 names, and then pick the best-predicted and worst-predicted names to form a new subset of 300 names to perform the attack again. But even for the advanced version, the inference accuracy is only a little better than random guess probability (60+\%), so this attack is not successful on \textsc{CustomerSim}.

\end{document}